\documentclass[letterpaper, 10 pt, conference]{ieeeconf} 
\IEEEoverridecommandlockouts                              
\overrideIEEEmargins

\usepackage{multirow}
\usepackage{amsmath}

\usepackage{verbatim}
\usepackage{amssymb}
\usepackage{mathtools}
\usepackage{commath}
\usepackage{nth}
\usepackage{cite}
\usepackage{cleveref}
\usepackage{caption,url}
\usepackage{subcaption}
\usepackage{xcolor}
\usepackage{dblfloatfix} 
\usepackage{float} 

\usepackage{soul}

\usepackage{version}

\excludeversion{old_text}

\usepackage[vlined,ruled]{algorithm2e}

\usepackage {tikz}
\usetikzlibrary{shapes ,arrows , positioning }
\def\prob{\mathbb{P}}
\def\expt{\mathbb{E}}
\def\real{\mathbb{R}}

\def\natural{\mathbb{N}}

\usepackage{amsmath,graphicx,epsfig,color,amsfonts}

\graphicspath{{./fig/}}

\def\prob{\mathbb{P}}
\def\expt{\mathbb{E}}
\def\real{\mathbb{R}}

\def\natural{\mathbb{N}}
\newcommand{\until}[1]{\{1,\dots, #1\}}

\newcommand{\supscr}[2]{#1^{\textup{#2}}}

\newcommand\oprocendsymbol{\hbox{$\square$}}
\newcommand\oprocend{\relax\ifmmode\else\unskip\hfill\fi\oprocendsymbol}

\newcommand\bit[1]{\textit{\textbf{#1}}}

\def \mc {\mathcal}

\newtheorem{theorem}{Theorem}

\newtheorem{lemma}[theorem]{Lemma}

\usepackage{caption}
\captionsetup[figure]{font=footnotesize}
\captionsetup[table]{font=footnotesize}

\title{\LARGE \bf
Assistance-Seeking in Human-Supervised Autonomy: \\ Role of Trust and Secondary Task Engagement (Extended Version)

\thanks{This work was supported in part  by the NSF award ECCS 2024649 and the ONR award N00014-22-1-2813.}
}

\author{Dong Hae Mangalindan and Vaibhav Srivastava
\thanks{D. H. Mangalindan and V. Srivastava are with the Department of Electrical and Computer Engineering, Michigan State University, East Lansing, MI 48824  e-mail: \texttt{\{mangalin, vaibhav\}@egr.msu.edu}}
}

\begin{document}

\maketitle
\thispagestyle{empty}
\pagestyle{empty}

\begin{abstract}

Using a dual-task paradigm, we explore how robot actions, performance, and the introduction of a secondary task influence human trust and engagement. In our study, a human supervisor simultaneously engages in a target-tracking task while supervising a mobile manipulator performing an object collection task. The robot can either autonomously collect the object or ask for human assistance. The human supervisor also has the choice to rely on or interrupt the robot. Using data from initial experiments, we model the dynamics of human trust and engagement using a linear dynamical system (LDS). Furthermore, we develop a human action model to define the probability of human reliance on the robot.  Our model suggests that participants are more likely to interrupt the robot when their trust and engagement are low during high-complexity collection tasks. Using Model Predictive Control (MPC), we design an optimal assistance-seeking policy. Evaluation experiments demonstrate the superior performance of the MPC policy over the baseline policy for most participants.
\end{abstract}

\section{Introduction}

Technological advancements have led to the widespread deployment of autonomous systems in industrial, commercial, healthcare, and agricultural sectors. These systems can handle hazardous or repetitive tasks, optimizing human resource allocation. However, despite their increasing efficiency and reliability, they often still need human supervision for complex tasks and in uncertain environments~ \cite{Selma2017,akash2020human,peterssupervisor2015, gupta2024structural}.

Trust in human-robot interaction is formally defined as ``the attitude that an agent will help achieve an individual’s goals in a situation characterized by uncertainty and vulnerability''~\cite{lee2004trust}. Autonomous systems are often underutilized due to the lack of trust, defeating the purpose and benefits of using automation. In contrast, an excessive reliance or trust in automation can lead to misuse or abuse of the system\cite{hoff2015trust,parasuraman1997humans}. 
Therefore, for efficient human-robot collaboration, human trust in autonomous systems should be carefully calibrated.

Supervisors often juggle multiple tasks, managing their own responsibilities while overseeing others. This dynamic also applies to human supervisors overseeing autonomous agents. Supervisors must maintain their task performance even while managing these agents. For instance, imagine a human and an autonomous robot working together in an orchard, collecting fruits. The human is tasked with both collecting fruits and ensuring the robot functions correctly. The human should only intervene with the robot when absolutely necessary to prevent compromising their own fruit collection. Similarly, the robot should be designed to operate with minimal interference to the human's tasks.

In this work, within a dual-task paradigm involving the supervision of an autonomous assistance-seeking object-collection robot, we study how the robot's assistance-seeking affects human trust in the robot and their engagement in the secondary task. We also investigate how robot's actions and performance impact supervisors' secondary task performance. We model the dynamics of human trust in the robot and secondary-task engagement using LDS, which are then utilized to design an optimal assistance-seeking policy.

To gain insight into human trust, it is important to understand the various factors that shape it. The effect of robot reliability, workload, self-confidence, and information transparency on human trust has been studied~\cite{lee1992trust, lee1994trust, manzey2012human, desai2013impact, akash2017dynamic, yang2017evaluating, Wang2015DynamicRS, chen2020trust, nikolaidis2012human, akash2020human}. 
These works highlight that negative experiences lead to a decrease in human trust. Human reliance on automation is influenced not only by trust but also by self-confidence. When individuals have lower self-confidence than their trust in automation, they are more likely to rely on automated systems~\cite {lee1994trust}. 
Additionally, a higher level of transparency, i.e., providing more information to the human, can often have a positive effect on human trust but can increase their workload~\cite{yang2017evaluating,akash2020human}.
Environmental factors such as task complexity affect human trust as well. A comprehensive investigation into factors affecting human trust in automation categorizes them into three main groups: robot-related, human-related, and environment-related factors~\cite{hancock2011meta}, and suggests that robot-related factors such as reliability have the highest effect on trust, followed by environment-related factors.

Alongside the study of factors influencing trust, several models of trust evolution have been proposed, and they are categorized into various groups. A class of these models is called probabilistic models. Probabilistic trust models treat trust as a latent state and estimate its distribution within a Bayesian framework~\cite{OPTIMO}. This distribution is conditioned on factors such as human actions, robot performance, and other contextual information relevant to the task. A subclass of such models is Partially Observable Markov Decision Process (POMDP)-based models, where the dynamics of trust are defined by the state transition functions conditioned on actions and contextual information, with observations treated as human actions~\cite{chen2020trust, nikolaidis2012human, zahedi2023trust, akash2020human, 10.5555/2906831.2906852, DBLP:journals/corr/abs-2009-11890, mangalindan2023trust}. Such models have been applied for planning to minimize human interruption~\cite{chen2020trust, mangalindan2023trust}. These models have also been used in tasks where it is needed to optimize the amount of information provided to humans, such that efficiency in the collaboration is maintained~\cite{akash2020human, 10.5555/2906831.2906852, DBLP:journals/corr/abs-2009-11890}.
 
Deterministic linear dynamical systems are used to represent trust dynamics, with trust level, cumulative trust level, and expectation bias as system states. The inputs to these models include factors that influence human trust, such as robot performance and task-specific contextual information~\cite{akash2017dynamic, yang2017evaluating, Wang2015DynamicRS}. Additionally, alongside human trust towards automation, models have been developed to capture automation's trust towards humans~\cite{rahman2018mutual, Wang2014}. Process and measurement noises have also been incorporated into linear models of human trust. To account for these noises, filtering techniques are necessary for state estimation using output measurements~\cite{azevedo2021real}. Similarly to~\cite{azevedo2021real}, we focus on a noisy linear trust model; however, our measurement model is nonlinear (binary). The latter necessitates nonlinear filtering techniques, specifically the particle filter.

Additionally, we leverage our learned model to determine and experimentally evaluate an optimal assistance-seeking policy.

Using a dual-task paradigm, we investigate how robot actions and performance impact human trust and secondary task engagement. Human participants perform both supervisory and target-tracking tasks simultaneously. We explore an assistance-seeking policy for a robot as a function of human trust, secondary task engagement, and task-specific contexts. We conduct two sets of experiments with human participants.
The initial data collection experiment involves a randomized assistance-seeking policy for the robot and random task complexity assignments. Using the collected data, we estimate models for trust dynamics and secondary task engagement using an LDS. 
In this model, the dynamics of human trust and secondary task engagement are influenced by the robot and human actions, the outcomes of those actions, and the complexities of both tasks. Additionally, we develop a model to predict human action as a function of their trust and engagement. We leverage Model Predictive Control (MPC)\cite{morari1999model} with the estimated models to compute the robot actions to maximize team performance. 
%based on the current states and task complexities. 
Subsequently, we evaluate the MPC-based assistance-seeking policy in the second set of experiments and compare it with a baseline policy.
The contributions of this work are:
\begin{itemize}
  \item the study of the effect of assistance-seeking on human trust and secondary task performance in a dual-task paradigm;
  \item the design of an optimal-assistance-seeking policy using estimated human behavior model;  
  % modeled trust and secondary task engagement dynamics, and a human behavior model trained with data collected from human participants
  \item the evaluation of the designed policy in a second set of experiments with human participants; and
  \item the use of a non-linear filter for real-time estimation of trust and secondary task engagement.
\end{itemize}

The rest of the paper is organized as follows. The dual-task setup and tasks are described in Section~\ref{sec:Exp}. We describe the human behavior model in Section~\ref{sec:Model}. We describe the design of MPC policy and its evaluation using a second set of experiments in Section~\ref{sec:policy_eval}. We discuss our findings and connect them to a broader context in Section~\ref{sec:discussion} and 
and conclude in Section~\ref{sec:concl}.

\section{Dual Task Setup: Robot Supervision and target-tracking task}
\label{sec:Exp}

In our experiment, we rely on the dual-task paradigm, wherein the human participant engages in two simultaneous tasks. The human participant supervises an autonomous mobile manipulator tasked with retrieving objects from shelves, while simultaneously performing a target-tracking task. The experiment interface is shown in Fig.~\ref{fig:exp_int}. We now define both tasks in more detail. 

\begin{figure*}[t!]
    % \vspace{0.03in}
    \centering
    \includegraphics[width=\linewidth]{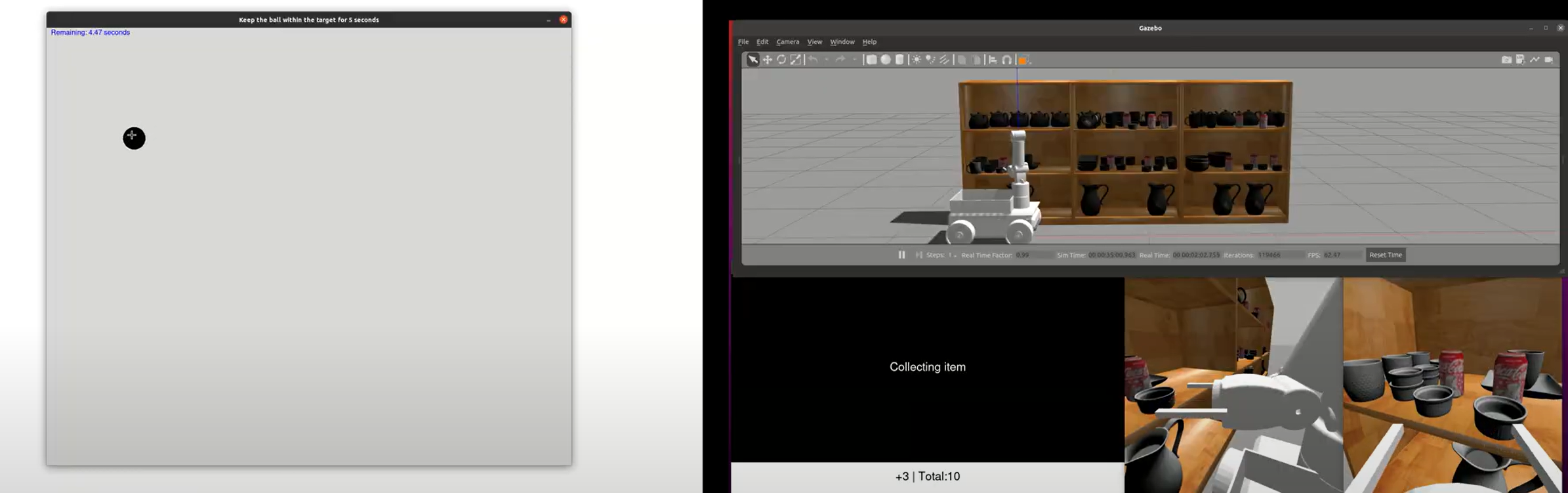}
    \caption{Experiment interface showing the target-tracking and supervisory task rendered on two adjacent screens.The setup uses ROS-Gazebo \cite{Gazebo} and resources available in \cite{downs2022google, ROSMoMa}.}
    \label{fig:exp_int}
    % \vspace{-0.15in}
\end{figure*}

\subsection{Human Supervised Robotic Object Collection Task}

In our supervisory task, the human supervisor collaborates with a robot to gather items from grocery shelves. The supervisory goal is to monitor the mobile manipulator and ensure its safety and efficiency. The human supervisor monitors a live feed displaying the robot's actions and views from two cameras placed on the end-effector (see Fig.~\ref{fig:exp_int}; right panel). 

In each trial, the mobile manipulator has the choice of attempting autonomous collection ($a^{R+}$) or requesting human assistance ($a^{R-}$), wherein the human must teleoperate the manipulator using a joystick and retrieve the item. When the robot initiates autonomous collection of an object, the human supervisor can either rely ($a^{H+}$) on the robot to collect the object autonomously, or intervene ($a^{H-}$) and collect the object via teleoperation.

 The complexity ($C^1$) of each object collection task is determined by the object's location relative to the mobile manipulator. Specifically, the complexity is high ($C^1= \supscr{C}{1,H}$) if the manipulator's direct path to the object is obstructed by an obstacle; otherwise, the complexity is low  ($C^1= \supscr{C}{1,L}$).

The outcome of each trial of the object collection task is either a success ($E^+$), if the object is collected successfully and dropped into the bin; or a failure ($E^-$), otherwise.
The human-robot team reward for the object collection task is defined by 
\begin{equation*}\label{eq:reward} \small 
\supscr{\mc R}{coll}({a^R,a^H,E})=
\begin{cases}
+3, & \text{if } (a^{R}, a^{H}, E) = (a^{R+}, a^{H+}, E^+),\\
+1, & \text{if } a^{R}=a^{H+},\\
0, & \text{if } (a^{R}, a^{H}) =  (a^{R+}, a^{H-}),\\
-4, & \text{if } (a^{R}, a^{H}, E) = (a^{R+}, a^{H+}, E^-).
\end{cases}
\end{equation*}
The objective of the human-robot team for the object collection tasks is to maximize the cumulative team reward across all trials.

\subsection{Target-tracking Task}

In addition to the supervisory task, the human participant also performs a target-tracking task (see Fig.~\ref{fig:exp_int}; left panel). In each target-tracking trial, the human participant must maintain the computer cursor inside a moving ball using the mouse for a total duration of 5 seconds.  In our experiment, we select two possible speeds (slow and normal) for the ball and they result in complexity $ C^2 \in \{\supscr{C}{$2$,slow}, \supscr{C}{$2$,norm}\}$. 

The human participants are instructed to track the ball (target) for a total cumulative time of 5 seconds. Their target-tracking performance $p$ is determined by taking the goal time of 5 seconds, dividing it by the total time it took to finish the trial, and multiplying by $100$ to get the score in \%. The participants were instructed to achieve a performance of at least $75\%$. 
\begin{equation*}\label{eq:reward} \small 
\supscr{\mc R}{track}({C^2}, p)=
\begin{cases}
+0.5, & \text{if $\supscr{C}{$2$,norm}$ and } p \geq 75\%  ,\\
+0.25, & \text{if  $\supscr{C}{$2$,slow}$ and } p \ge 75\%,\\
0, & \text{if } p <75\%.\\
\end{cases}
\end{equation*}

\subsection{Data Collection for Modeling Human Behavior}

To build a model of human supervisory behavior, we conducted a study\footnote{The human behavioral experiments were approved under Michigan State University Institutional Review Board Study ID 9452.} with $11$ human participants ($4$ males and $7$ females), performing the above supervisory and target-tracking tasks simultaneously.  Participants were recruited via e-mail for the in-person experiment. Participants were first and second-year college students not in the engineering program. An experiment's total duration was one hour and participants were compensated $\$15$ for their participation.

For the supervisory object collection tasks, each participant performed $30$ low-complexity and $30$ high-complexity trials, randomly distributed throughout the experiment.
When the manipulator operates autonomously, it has a success probability of $\supscr{p}{suc}_H=0.75$ in $\supscr{C}{1,H}$ and $\supscr{p}{suc}_L=0.96$ in $\supscr{C}{1,L}$. In each trial, the robot asks for human assistance with a probability of 0.3 in $\supscr{C}{1,H}$ trials, and with a probability of 0.1 in $\supscr{C}{1,L}$ trials. After each trial, participants reported their trust in the autonomous mobile manipulator through an interface with an 11-point scale. 
For each object collection trial, the data collected are the complexity $C^1$, robot action $a^R$, human action $a^H$, outcome $E$, and the reported human trust $y^T$.

For the target-tracking task, participants performed $30$ slow-speed trials and $30$ normal-speed trials. To ensure that all participants do the same number of trials per complexity, an initial set of $60$ such trials was selected and trials were randomly permuted for each participant and independently of the object-collection task.  
In each trial, the data collected are the complexity of the trial $C^2$, and the performance $p$.

\subsection{Data Summary}

For the supervisory task, after a successful autonomous collection, the mean trust ratings of participants are $7.38$ for low complexity and $7.36$ for high complexity, with an overall mean trust rating of $7.37$ across both complexities. After a failure, the mean trust ratings were $6.36$ for low complexity and $6.68$ for high complexity, with an overall mean of $6.6$ across both complexities. When the robot asked for assistance, the mean trust ratings were $7.3$ for low complexity and $6.98$ for high complexity, averaging $7.03$ for both complexities combined. However, when the human voluntarily interrupted, the mean trust rating dropped to $6.29$ when they interrupted the robot in high-complexity, $5$ in low-complexity, and $6.17$ across both complexities.

For the tracking task, at slow speeds, the mean performance was $p=89\%$ with $90\%$ of the trials having $p\ge75\%$. At normal speeds, the mean performance dropped to $p=82\%$ with $78\%$ of the trials having $p\ge75\%$. As expected, $p<75\%$ whenever participants interrupted the robot.

\section {Human Behavior Model}
\label{sec:Model}

In this section, we focus on a model of human supervisory behavior, i.e., a model that predicts when the human supervisor may intervene in the autonomous operation of the manipulator. We assume that human action is modulated by their trust in the mobile manipulator and their engagement in the target tracking task. 

In the following, we denote trust at the beginning of trial $t$ by $T_t \in \real$, the target-tracking engagement in trial $(t-1)$ by $G_t \in [0,10]$, the complexities of object collection and target-tracking task by $C^1_t$ and $C^2_t$, respectively, the human and robot actions during collaborative object collection in trial $t$ by $a^H_t$ and $a^R_t$, respectively, and the outcome of the object collection in trial $t$ by $E_{t+1}$.

\subsection{Human Trust Dynamics}
\label{subsec:Human_trust}

The evolution of human trust in the robot is known to be influenced by several factors such as the current trust level,  robot performance, robot actions, environmental complexity, and human actions. 
Accordingly, we assume that trust $T_{t+1}$ at the beginning of trial $t+1$ is influenced by the previous trust $T_t$, robot action $a^R_t$, collection task complexity $C^1_t$, human action $a^H_t$, and the resulting outcome of the object collection task $E_{t+1}$.

Consider the following events corresponding to 6 different combinations of $C^1$, $a^R$, and $E$; and the additional case when human interrupts the robot corresponding to $(a_t^R, a_t^H)=(a^{R+},a^{H-})$. Note that for brevity, we have defined the events with their outcomes. 
\medskip

\resizebox{0.95\linewidth}{!}{
\centering
\begin{tabular}{ |c|l||c|l|} 
\hline 
$\theta^1_t$ & $(C^{1,L}_t,a^{R+}_t,a^{H+}_t,E^+_{t+1})$ & $\theta^5_t$ & $(C^{1,H}_t,a^{R+}_t,a^{H+}_t,E^-_{t+1})$ \\ 
$\theta^2_t$ & $(C^{1,L}_t,a^{R+}_t,a^{H+}_t,E^-_{t+1})$ & $\theta^6_t$ & $(C^{1,H}_t,a^{R-}_t)$   \\ 
$\theta^3_t$ & $(C^{1,L}_t,a^{R-}_t)$  & $\theta^7_t$ & $(a^{R+}_t,a^{H-}_t)$ \\ 
$\theta^4_t$ & $(C^{1,H}_t,a^{R+}_t,a^{H+}_t,E^+_{t+1})$ &&\\ 
\hline
\end{tabular}}

\medskip

Let $u^T_t \in \{0,1\}^7$ be an indicator vector representing one of the above 7 possible events in collaborative object collection trial $t$.
We model the trust dynamics using the following LDS 
\begin{equation}\label{eq:trust-dynamics}
    T_{t+1}=A^TT_t+B^Tu^T_t+v^T_t,
\end{equation}
where $A^T \in (0,1)$, $B^T \in [-1,1]^{1\times 7}$, and 
$ v^T_t \sim \mathcal{N}(0,\,\sigma^2_T)\,, t \in \natural$ are i.i.d. realizations of zero-mean Gaussian noise. The entries of $B^T$ determine the influence of the events $\theta^i_t$ on the evolution of trust. 

We assume that the trust reported by the participants after each trial is possibly a scaled and noisy measurement of the trust $T_t$, i.e., 
\begin{equation}\label{eq:trust-measurement}
    y^T_t=C^TT_t+w^T_t,
\end{equation}
where $C^T\in [0,1]$ and $ w_t^T \sim \mathcal{N}(0,\,\sigma_y^2)\,, t \in \natural$ are i.i.d. realizations of zero-mean Gaussian noise.

The trust model parameters include $A^T, B^T, C^T, \sigma^2_T$ and $\sigma^2_y$. To estimate these parameters from the reported trust values $y^T_t$, we adopt the Expectation-Maximization (EM) algorithm~\cite{pml2Book}.  
The estimated parameters of the trust dynamics model are 
\begin{align*}
   A^T &= 0.92, \quad \sigma^2_T  = 0.22,\quad C^T = 1.00,\quad \sigma^2_y  =  0.22,\\
   B^T & = \setlength{\arraycolsep}{4.5pt}\begin{bmatrix}
  0.76 &  -0.38 &  0.26 &0.78 & -0.43 & 0.52 & -0.12
\end{bmatrix}.
\end{align*}

The elements of $B^T$ represent the effect of each scenario on human trust. The estimated values for contexts associated with a negative experience, e.g., a failed collection or human intervention are negative. Similarly, the estimated values for contexts associated with a positive experience, e.g., successful collections or when the robot requests assistance, are positive. 
Interestingly, as found in our previous study~\cite{mangalindan2023trust}, asking for assistance can help increase human trust, which is also consistent with the estimated $B^T$. In other words, the estimates suggest that   
successful collection increases and maintains trust; failed collections decrease it; and asking for assistance can help repair and increase trust. Additionally, human interruptions decrease their trust.

\subsection{Human Target-tracking Engagement Dynamics}
\label{subsec:Human_per}

Similar to the human trust dynamics, we seek to model the influence of robot action $a^R_t$, the complexity of tracking task $C^2_t$, and previous experience in the supervisory object collection task $E_{t}$ on the evolution of human target-tracking engagement and performance. The experience $E_t$  is the outcome of the trial for autonomous collection; while, for engagement dynamics, we classify it as $E^-$ if the human interrupts the robot, and as $E^+$, if the robot asks for assistance. 

Consider the following events corresponding to 8 different combinations of $C^2$, $a^R$, and  $E$. Note that for brevity, we have defined the events with their outcomes.
\medskip 

\resizebox{0.95\linewidth}{!}{
\centering
\begin{tabular}{ |c|c||c|c|} 
\hline 
$\phi^1_t$ & $(\supscr{C}{2,slow}_t,a^{R-}_t,E^+_t)$ & $\phi^5_t$ & $(\supscr{C}{2,slow}_t,a^{R+}_t,E^+_t)$ \\ 
$\phi^2_t$ & $(\supscr{C}{2,norm}_t,a^{R-}_t,E^+_t)$ & $\phi^6_t$ & $(\supscr{C}{2,norm}_t,a^{R+}_t,E^+_t)$   \\ 
$\phi^3_t$ & $(\supscr{C}{2,slow}_t,a^{R-}_t,E^-_t)$  & $\phi^7_t$ & $(\supscr{C}{2,slow}_t,a^{R+}_t,E^-_t)$ \\ 
$\phi^4_t$ & $\supscr{C}{2,norm}_t,a^{R-}_t,,E^-_t)$ & $\phi^8_t$ & $(\supscr{C}{2,norm}_t,a^{R+}_t,E^-_t)$\\ 
\hline
\end{tabular}}

\medskip 

Let $u^G_t \in \{0,1\}^8$ be an indicator vector representing one of the above 8 possible events in the target-tracking task during trial $t$.
We model the human target-tracking engagement dynamics as an LDS 
\begin{align}\label{eq:performance-dynamics}
    G_{t+1}&=A^GG_t+B^Gu^G_t+v^G_t, \\\label{eq:performance-measurement}
    p_{t}&=C^GG_t+w^G_t,  
\end{align}
where $A^G \in [0,1]$, $B^G \in [0,10]^{1\times 8}$,
$ v^G_t \sim \mathcal{N}(0,\,\sigma^2_G)\,, t \in \natural$ are i.i.d. realizations of zero-mean Gaussian noise, 
$C^G\in [0,10]$, and  $ w^G_t \sim \mathcal{N}(0,\,\sigma^2_p)\,, t \in \natural$ are i.i.d. realizations of zero-mean Gaussian noise. 
The entries of $B^G$ determine the influence of the events $\phi^i_t$ on the human engagement in the tracking task. 

The engagement model parameters include $A^G, B^G, C^G, \sigma^2_G$ and $\sigma^2_p$. Considering $p$ as the tracking task result of the human, we adopt the EM algorithm to estimate these parameters.  

\begin{align*}
   A^G &= 0.19, \quad \sigma^2_G  = 1.44, \quad C^G = 9.96, \quad
 \sigma^2_p  =  3.79, \\
   B^G & = \setlength{\arraycolsep}{4.5pt}\begin{bmatrix}
  7.47 & 6.72 & 7.24 & 6.38 & 7.30 & 6.51 & 7.06 & 6.59
\end{bmatrix}.
\end{align*}
The elements of $B^G$ represent the effect of each scenario on human tracking-task engagement and performance.

The estimated $B^G$ indicates that whenever the robot asks for assistance, {for the majority of cases}, the associated estimated values are greater than the values associated with the cases when the robot collects autonomously. This is expected as whenever the robot asks for assistance, the human supervisor can focus more on their tracking task and finish it before teleoperating the robot, knowing that the robot has stopped and will not move unless they take control of it. In contrast, when the robot attempts to collect autonomously, the human supervisor may switch attention from the tracking task to the supervisory task to decide whether to rely on the robot. The estimated parameters are higher when the target speed is slow compared to when it is normal. Whenever the previous experience with the robot is $E^-$, the parameters are lower compared to the case of $E^+$. This is consistent with the fact that whenever humans perceive a failure, they tend to pay more attention to the supervisory task in the future.

\subsection{Human Action Model}
\label{subsec: Human_behavior}

We now focus on modeling the probability of the human relying on the robot when the robot attempts to collect autonomously. We assume that this probability depends on the object collection task complexity $C^1_t$, human trust $T_t$, and human target-tracking engagement $G_t$, which is representative of the attention they put on the supervisory task. We model the human action probability as a sigmoid function 
\begin{align}\label{eq:action-model}
 \prob(a^{H+}|T,G,C^1)=\frac{1}{1+e^{-(a_T^{C^1}T+a_G^{C^1}G+b^{C^1})}}.
\end{align}
The model parameters are learned with Maximum Likelihood Estimation using the Monte Carlo simulation method, resulting in the following parameter values for different object collection task complexities
\begin{center}
\begin{tabular}{ |c|c|c|c|} 
\hline
& $a_T^{C^1}$ & $a_G^{C^1}$ & $b^{C^1}$ \\
\hline
$C^{1,L}$ & 0.09 & 0.08  & 3.6 \\ 
\hline
$C^{1,H}$ & 0.20 & 0.40 & -2.7 \\ 
\hline
\end{tabular}
\end{center}

The probabilities of reliance for each object collection task complexity are shown in Fig.~\ref{fig:Human_act}. For low complexity $C^{1,L}$, the probability of reliance is high and close to 1, regardless of $T$ and $G$. For high complexity $C^{1,H}$, there is a lower probability of reliance in general as compared to low complexity $C^{1,L}$. It can be seen that the probability of reliance increases with trust as well as with tracking task engagement. The latter may be caused by the human focusing more on the tracking task as compared to the supervisory task, thereby increasing their probability of reliance on the mobile manipulator.

\begin{figure}[h!]
    \centering
    \includegraphics[width=4cm, angle=-90]{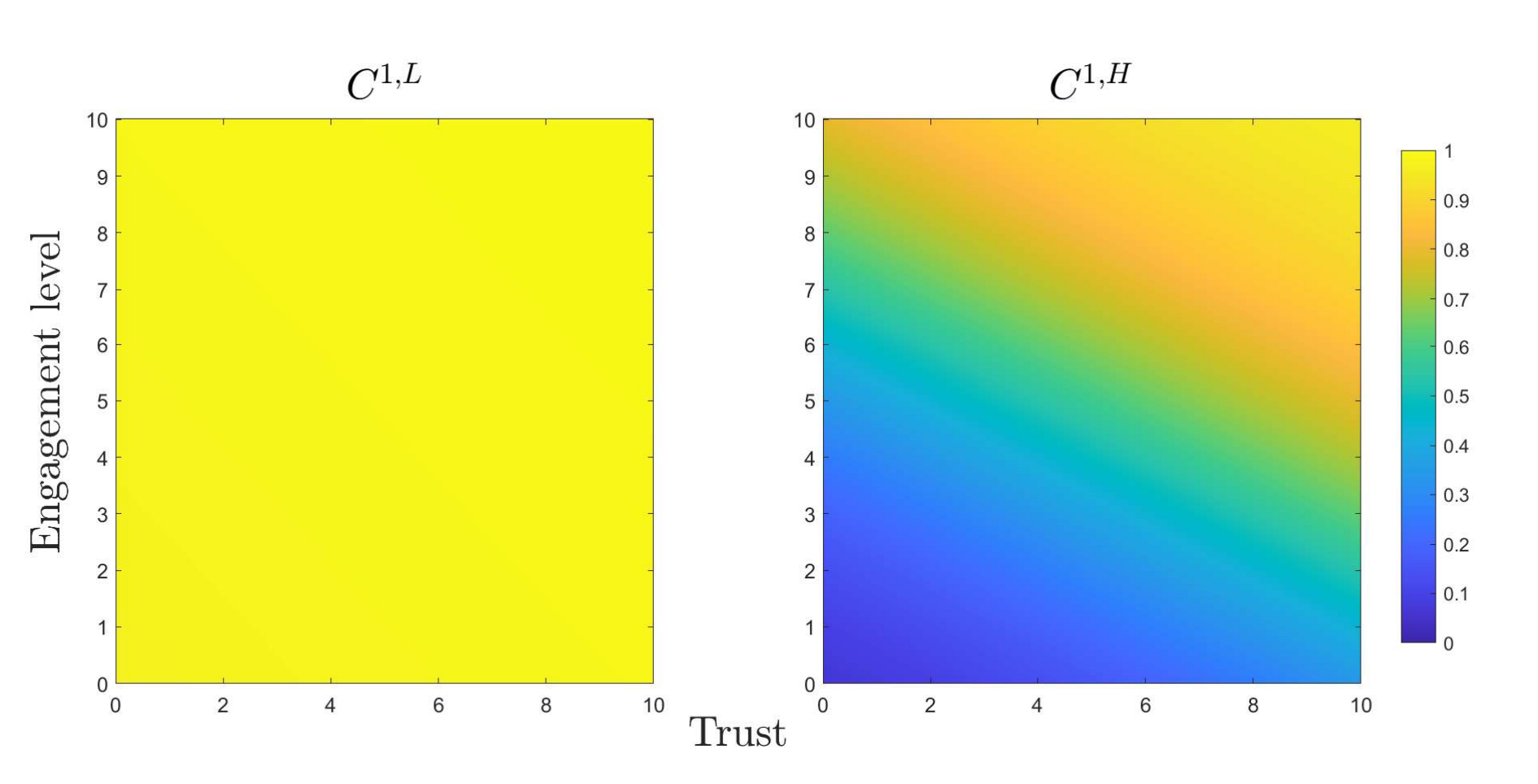}
    \caption{Human action model: Probability of reliance $\prob(a^{H+}|T,G,C^1,a^{R+})$ conditioned on object collection task complexity $C^1$,$T$, and $F$}
    \label{fig:Human_act}
\end{figure}

Fig.~\ref{fig:diagram} illustrates the relationship between the exogenous factors, trust dynamics, tracking task engagement dynamics, and human action model.

\begin{figure}[h!]
    % \vspace{0.03in}
    \centering
    \includegraphics[width=0.8\linewidth]{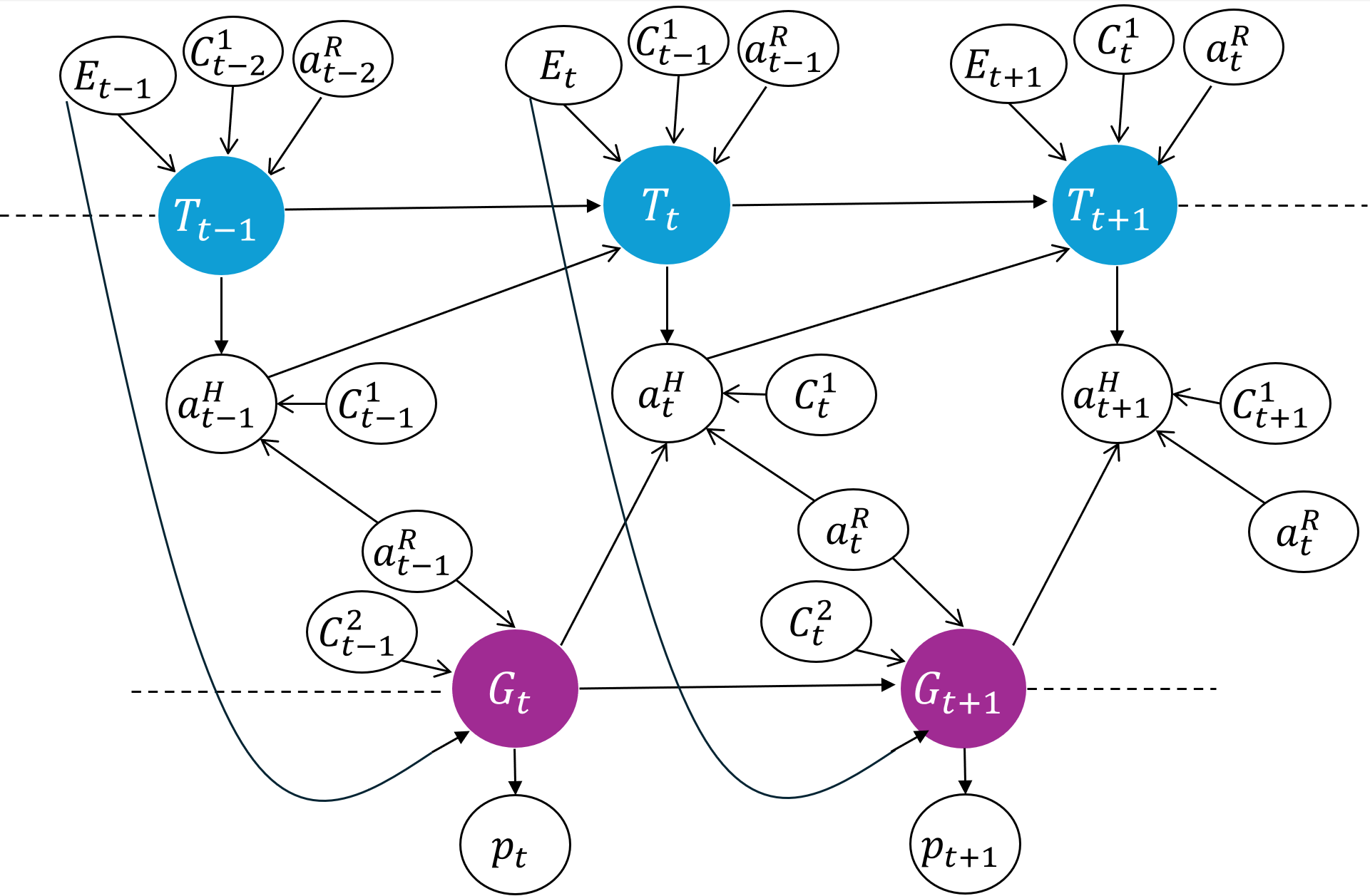}
    \caption{Human trust and target tracking engagement dynamics. Human trust and tracking task engagement modulate human action.}
    \label{fig:diagram}
    % \vspace{-0.15in}
\end{figure}

%\clearpage 

\section{Design of human assistance seeking policy}
\label{sec:policy_eval}
In this section, we propose an optimal assistance-seeking policy, leveraging the estimated trust dynamics, target-tracking dynamics, and human action model.

\subsection{Expected Reward and  Certainty-equivalent Dynamics}
Since the choice for the robot is to perform autonomous collection or to seek assistance in each trial, we first write the reward for each trial as a function of these actions and the expected evolution of the trust and engagement dynamics with only the robot action as an exogenous input. In the following, we assume that the complexities of the object collection and target tracking tasks are i.i.d. Bernoulli random variables. We assume for each trial, the probability of object collection complexity being high is $\beta_1\in [0,1]$ and the probability of target tracking speed being normal is $\beta_2 \in [0,1]$.

\begin{lemma}[\bit{Expected rewards}] \label{lem:expt-rewards}
     Given the probability of successful autonomous collection by the robot, $\supscr{p}{suc}_{C^1}$, the probabilities of human action  $\prob(a^{H}|T,G ,C^1), \ a^{H} \in \{a^{H+}, a^{H-}\}$, for $C^1 \in \{\supscr{C}{1,L}, \supscr{C}{1,H}\}$, and the probability of an object-collection trial being high complexity, $\beta_1$,    
    the expected rewards for action $a^{R+}$ in the supervisory task is 
    \begin{align}
    \supscr{{\mc J}}{coll}({T_t,G_t,a^{R+}_t})&=(1-\beta_1)\supscr{{\mc R}}{coll}_{C^{1,L}}(T_t,G_t,a^{R+}_t) \notag\\
    &+ \beta_1 \supscr{{\mc R}}{coll}_{C^{1,H}}(T_t,G_t,a^{R+}_t), \label{eq:expt-reward-collection}
    \end{align}
 where 
    \begin{align*}
    \supscr{{\mc R}}{coll}_{C^1}(T_t,G_t,a^{R+}_t) &=\prob(a^{H+}|T_t,G_t,C^1_t)\supscr{{\mc R}}{auto-coll}_{C^1}(a^{R+}, a^{H+})\\
    &+\prob(a^{H-}|T_t,G_t,C^1_t)\supscr{{\mc R}}{coll}(a^{R+},a^{H-}, *),
    \end{align*}
is the expected reward for taking action $a^{R+}$ in $C^1$, considering possible human actions $a^H$, and
    \begin{align*}
    \supscr{{\mc R}}{auto-coll}_{C^1}(a^{R+}, a^{H+}) &=\supscr{p}{suc}_{C^1}\supscr{{\mc R}}{coll}(a^{R+},a^{H+},E^+)\\
    &+(1-\supscr{p}{suc}_{C^1})\supscr{{\mc R}}{coll}(a^{R+},a^{H+},E^-),
    \end{align*}
is the expected reward for an autonomous collection in complexity $C^1$.

Similarly, for the probability of the target-tracking task being normal speed, $\beta_2$,  the expected reward $\supscr{{\mc J}}{track}({G_t,a^R_t})$ in the target-tracking task is 
    \begin{align*}
        \supscr{{\mc J}}{track}({G_t,a^R_t})&=(1-\beta_2)\supscr{{\mc R}}{track}(\supscr{C}{2,slow},p_{t+1})\\
        &+ \beta_2 \supscr{{\mc R}}{track}(\supscr{C}{2,norm},p_{t+1}). 
    \end{align*}

\end{lemma}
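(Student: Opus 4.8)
The plan is to obtain both expected-reward formulas by straightforward conditioning (law of total expectation) on the randomness that has not yet been fixed at the start of trial $t$: the task complexities $C^1_t, C^2_t$, the human action $a^H_t$, and the collection outcome $E_{t+1}$. Everything here is a finite-support expectation, so the only real content is to organize the conditioning in the right nested order and to match the resulting terms to the reward tables $\supscr{\mc R}{coll}$ and $\supscr{\mc R}{track}$ defined in Section~\ref{sec:Exp}.

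First I would treat the supervisory task. Fix the robot action $a^{R+}_t$ and condition on the collection complexity $C^1_t$, which by assumption is high with probability $\beta_1$ and low with probability $1-\beta_1$; this immediately gives the outer convex combination in \eqref{eq:expt-reward-collection} once we define $\supscr{\mc R}{coll}_{C^1}(T_t,G_t,a^{R+}_t)$ as the conditional expected reward given complexity $C^1$. Next, inside a fixed complexity $C^1$, condition on the human action $a^H_t \in \{a^{H+},a^{H-}\}$, whose probabilities are supplied by the human action model \eqref{eq:action-model}; on the event $a^{H-}_t$ the reward is deterministic (the teleoperation case, value $+1$ in $\supscr{\mc R}{coll}$, written here as $\supscr{\mc R}{coll}(a^{R+},a^{H-},*)$), while on the event $a^{H+}_t$ we must condition once more, now on the outcome $E_{t+1}$, which is $E^+$ with probability $\supscr{p}{suc}_{C^1}$ and $E^-$ otherwise. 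That innermost step produces $\supscr{\mc R}{auto-coll}_{C^1}(a^{R+},a^{H+})$ as the stated average of $\supscr{\mc R}{coll}(a^{R+},a^{H+},E^+)$ and $\supscr{\mc R}{coll}(a^{R+},a^{H+},E^-)$. Collecting the three nested layers and using linearity of expectation at each stage yields exactly the claimed expression; I would then note that the argument for $a^{R-}_t$ is analogous (and simpler, since the robot requesting assistance removes the $E_{t+1}$ branch), so the lemma statement as written covers the case that is needed downstream.

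For the target-tracking task the argument is shorter: condition on the tracking complexity $C^2_t$, normal with probability $\beta_2$ and slow with probability $1-\beta_2$, which gives the displayed convex combination of $\supscr{\mc R}{track}(\supscr{C}{2,slow},p_{t+1})$ and $\supscr{\mc R}{track}(\supscr{C}{2,norm},p_{t+1})$. Here I would be careful to record that $p_{t+1}$ is itself the (certainty-equivalent) predicted performance obtained by pushing the engagement state $G_t$ one step through \eqref{eq:performance-dynamics}--\eqref{eq:performance-measurement} with the noise terms set to their means, and that the relevant event $\phi^i_t$ feeding $u^G_t$ is determined by $C^2_t$, $a^R_t$, and the reclassified experience $E_t$ as described just before the engagement model; so strictly speaking $\supscr{\mc R}{track}(\cdot,p_{t+1})$ should be read under the corresponding complexity-dependent choice of $u^G_t$.

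The main obstacle is not analytical but bookkeeping/definitional: making the conditioning order explicit enough that each term lands on the correct row of the piecewise reward $\supscr{\mc R}{coll}$, and being precise about what "$p_{t+1}$" means inside the tracking reward (deterministic one-step prediction under the certainty-equivalent dynamics, with the event indicator $u^G_t$ itself depending on $C^2_t$). Once those conventions are pinned down, the proof is a one-line application of the tower property at each layer.
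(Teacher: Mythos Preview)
Your proposal is correct and matches the paper's own proof, which is literally the one-line remark that ``the statement follows from straightforward expected value calculations''; your nested conditioning via the tower property on $C^1_t$, then $a^H_t$, then $E_{t+1}$ (and on $C^2_t$ for the tracking reward) is exactly that computation spelled out. One small slip: in your parenthetical the case $(a^{R+},a^{H-})$ carries reward $0$ in the table $\supscr{\mc R}{coll}$, not $+1$, but this does not affect the argument.
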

\begin{proof}
    The statement follows from straightforward expected value calculations. 
\end{proof}

\medskip 

We now define the certainty-equivalent trust and engagement dynamics in which only the robot action is the input. For trial $t$, we define $u_t= \begin{bmatrix}q_t& 1- q_t \end{bmatrix}^\top$, where  $q_t =\prob(a^R_t = a^{R+})$ is the probability with which the robot attempts to collect autonomously.

Let the probability of event $\theta^i_t$ conditioned on the value of $a^R_t$ stated in the event be $\prob(\theta^i_t), i \in\until{7}$ defined by
\begin{align*}
\prob(\theta^1_t) &= (1-\beta_1) \supscr{p}{suc}_{\supscr{C}{1,L}} \prob(a^{H+}|T,G ,C^1), \\   
\prob(\theta^2_t) &= (1-\beta_1) (1-\supscr{p}{suc}_{\supscr{C}{1,L}} )\prob(a^{H+}|T,G ,C^1),\\
\prob(\theta^3_t) &= (1-\beta_1), \\
\prob(\theta^4_t) &= \beta_1 \supscr{p}{suc}_{\supscr{C}{1,L}} \prob(a^{H+}|T,G ,C^1), \\
\prob(\theta^5_t) &= \beta_1 (1-\supscr{p}{suc}_{\supscr{C}{1,L}} )\prob(a^{H}|T,G ,C^1), \\
\prob(\theta^6_t) &= \beta_1, \\
\prob(\theta^7_t) &= \prob(a^{H-}|T,G ,C^1). 
\end{align*}

Likewise, let the probability of event $\phi^i_t$ conditioned on the value of $a^R_t$ stated in the event be $\prob(\phi^i_t), i \in\until{8}$ defined by

\begin{center}  
\begin{tabular}{ l|l } 
 $\prob(\phi^1_t) = (1-\beta_2)\epsilon_t ,$ & $\prob(\phi^5_t) = (1-\beta_2)\epsilon_t,$ \\ 
 $\prob(\phi^2_t) = \beta_2\epsilon_t,$ & $\prob(\phi^6_t) = \beta_2\epsilon_t,$ \\ 
 $\prob(\phi^3_t) = (1-\beta_2)(1-\epsilon_t),$ & $\prob(\phi^7_t) = (1-\beta_2)(1-\epsilon_t),$ \\
 $\prob(\phi^4_t) = \beta_2(1-\epsilon_t),$ & $\prob(\phi^8_t) = \beta_2(1-\epsilon_t),$\\
\end{tabular}
\end{center}  
where $\epsilon_t$ is the probability of previous experience being $E^+$ and is defined by
\begin{multline*}
\epsilon_t = 
q_{t-1} \expt_{C^1_{t-1}\sim \text{Ber}(\beta_1)}\big[\supscr{p}{suc}_{C^1}\prob(a^{H+}|T_{t-1},G_{t-1},C^1_{t-1}))\big] \\+(1-q_{t-1}), 
\end{multline*}
where we assume the previous experience as $E^-$ if the human interrupts the robot, and as $E^+$, if the robot asks for assistance.

\begin{lemma}[\bit{Certainty-equivalent evolution}]\label{lem-ce-evolution}
Given the probability of successful autonomous collection by the robot, $\supscr{p}{suc}_{C^1}$, the probabilities of human action  $\prob(a^{H}|T,G ,C^1), \ a^{H} \in \{a^{H+}, a^{H-}\}$, for $C^1 \in \{\supscr{C}{1,L}, \supscr{C}{1,H}\}$, the certainty-equivalent evolution of the trust $\bar T_t$ and engagement $\bar G_t$ dynamics and associated probability of intervention $\bar \prob_t$ and tracking performance $\bar p_{t}$ are
\begin{align*}
    \bar T_{t+1} &={A}^T \bar T_t+\hat{B}^T_t u_t\\
    \bar G_{t+1}&={A}^G \bar G_t+\hat{B}^G_t u_t \\
    \bar \prob_t & = \expt_{C^1 \sim\text{Ber}(\beta_1)} \big[\prob(a^{H+}|\bar T_t,\bar G_t,C^1)\big] \\
    \bar p_{t}& =C^G \bar G_t,
\end{align*}
where
\begin{align*}
    \hat{B}^T_t & = \begin{bmatrix}
    \sum_{i\notin\{3,6\}} \prob(\theta^i_t) (B^T)_{1,i} & \sum_{i\in\{3,6\}} \prob(\theta^i_t) (B^T)_{1,i}
\end{bmatrix}\\
    \hat{B}^G_t &=
    \begin{bmatrix}
    \sum_{i=5}^8 \prob(\phi^i_t)(B^G)_{1,i} & \sum_{i=1}^4 \prob(\phi^i_t)(B^G)_{1,i}
    \end{bmatrix}.
\end{align*}
\end{lemma}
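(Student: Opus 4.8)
The plan is to derive the certainty-equivalent dynamics by taking conditional expectations of the stochastic LDS in \eqref{eq:trust-dynamics} and \eqref{eq:performance-dynamics}, using the law of total expectation over the exogenous random variables (task complexities, outcomes, and human actions), treating the robot action probability $q_t$ as the only genuine control input. The key observation is that the indicator vectors $u^T_t$ and $u^G_t$ are random, and replacing them with their conditional expectations $\bar u^T_t = \expt[u^T_t]$ and $\bar u^G_t = \expt[u^G_t]$ turns $B^T u^T_t$ into $B^T \bar u^T_t$, which we then want to rewrite in the compact form $\hat B^T_t u_t$ where $u_t = [q_t, 1-q_t]^\top$.

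First I would write $\expt[(u^T_t)_i] = \prob(\theta^i_t \mid a^R_t)\, \prob(a^R_t = \cdot)$, where the conditioning in $\prob(\theta^i_t)$ as defined in the excerpt already fixes the value of $a^R_t$ stated in event $\theta^i_t$. Events $\theta^1, \theta^2, \theta^4, \theta^5, \theta^7$ all require $a^R_t = a^{R+}$, so their unconditional probabilities pick up a factor $q_t$; events $\theta^3, \theta^6$ require $a^R_t = a^{R-}$, so they pick up a factor $1-q_t$. Therefore $B^T \expt[u^T_t] = \big(\sum_{i\notin\{3,6\}} \prob(\theta^i_t)(B^T)_{1,i}\big) q_t + \big(\sum_{i\in\{3,6\}} \prob(\theta^i_t)(B^T)_{1,i}\big)(1-q_t)$, which is exactly $\hat B^T_t u_t$ with $\hat B^T_t$ as stated. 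Taking conditional expectation of \eqref{eq:trust-dynamics}, using linearity, $\expt[v^T_t]=0$, and $A^T$ constant, gives $\bar T_{t+1} = A^T \bar T_t + \hat B^T_t u_t$. The engagement case is analogous: events $\phi^5,\dots,\phi^8$ correspond to $a^R_t = a^{R+}$ (autonomous collection) and $\phi^1,\dots,\phi^4$ to $a^R_t = a^{R-}$, so the same factoring of $q_t$ versus $1-q_t$ produces the two-column $\hat B^G_t$; combined with $\expt[v^G_t]=0$ this yields $\bar G_{t+1} = A^G \bar G_t + \hat B^G_t u_t$. The measurement relation $\bar p_t = C^G \bar G_t$ is immediate from \eqref{eq:performance-measurement} by taking expectations and using $\expt[w^G_t]=0$, and the intervention probability $\bar\prob_t = \expt_{C^1\sim\text{Ber}(\beta_1)}[\prob(a^{H+}\mid \bar T_t,\bar G_t,C^1)]$ follows by averaging the human action model \eqref{eq:action-model} over the Bernoulli-distributed collection complexity.

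The step I expect to be the main obstacle — really the only non-routine point — is justifying that one may substitute the certainty-equivalent states $\bar T_t, \bar G_t$ into the (nonlinear) human action model and into the probabilities $\prob(\theta^i_t), \prob(\phi^i_t)$, rather than carrying the full distribution of $(T_t, G_t)$. This is the usual certainty-equivalence approximation: it is exact only through the linear parts of the dynamics, and for the sigmoid $\prob(a^{H+}\mid T,G,C^1)$ and for $\epsilon_t$ it is an approximation that replaces $\expt[f(T_t,G_t)]$ by $f(\expt[T_t],\expt[T_t])$. I would state this explicitly as the modeling assumption underlying the lemma (consistent with the ``certainty-equivalent'' terminology), so that the remaining derivation is purely the linearity-of-expectation bookkeeping described above. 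Everything else — the factoring of $q_t$ out of the event probabilities, the vanishing of the noise terms, and the column grouping in $\hat B^T_t$ and $\hat B^G_t$ — is a direct computation with no hidden difficulty.
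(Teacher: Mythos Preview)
Your proposal is correct and follows exactly the same approach as the paper: the paper's proof states only that the entries of $\hat B^T_t$ and $\hat B^G_t$ are obtained by averaging the corresponding entries of $B^T$ and $B^G$ weighted by the event probabilities, which is precisely the linearity-of-expectation bookkeeping you spell out. Your explicit acknowledgment of the certainty-equivalence approximation in the nonlinear human-action term is more careful than the paper, which leaves that point implicit.
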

\smallskip 
\begin{proof}
The entries in $\hat{B}^T_t$ and $\hat{B}^G_t$ correspond to the expected influence of the autonomous collection attempt and assistance-seeking on trust and engagement, respectively. They are obtained by averaging entries of $B^T$ and $B^G$, respectively, associated with these events weighted by their probabilities. 
\end{proof}

\subsection{MPC-based Assistance-seeking Policy}

Using the models described in Section~\ref{sec:Model}, we design an optimal assistance-seeking policy using Model Predictive Control (MPC)~\cite{rawlings2017model}. We adopt a certainty-equivalent MPC approach~\cite{mattingley2011receding} that considers the expected reward and deterministic system dynamics described in Lemmas~\ref{lem:expt-rewards}~and~\ref{lem-ce-evolution}, respectively. 
Specifically, the following optimization problem is solved at the beginning of each trial $t$ with trust and target-tracking engagement $T_t$ and $G_t$, respectively.  

\begin{equation}\label{eq:mpc} 
\begin{split}
    \underset{{q_t}, \ldots {q_{t+N-1}}}{\text{maximize}} \quad & \sum_{\tau=t}^{t+N-1} \supscr{\bar{\mc R}}{coll}({\bar T_\tau, \bar G_\tau,q_\tau})+\supscr{\bar{\mc R}}{track}({\bar G\tau,q_\tau}) \\
    \text{s.t.}\quad  & \bar T_{\tau+1}={A}^T \bar T_\tau+ \hat{B}^T_\tau u_\tau, \ \bar T_t = T_t\\
    & \bar G_{\tau+1}={A}^G \bar G_\tau+\hat{B}^G_\tau u_\tau\ ,\bar G_t = G_t \\
    & \bar \prob_T  = \expt_{C^1 \sim\text{Ber}(\beta_1)} \big[\prob(a^{H+}|\bar T_t,\bar G_t,C^1)\big] \\
    & \bar p_{t} =C^G \bar G_t \\
    & q_\tau \in [0,1],
\end{split}
\end{equation}

for $\tau=t,\ldots,(t+N-1)$, 
where $u_\tau= \begin{bmatrix}q_\tau& 1- q_\tau \end{bmatrix}^\top$, and the expected rewards are calculated as
\begin{align*}
    \supscr{\bar{\mc R}}{coll}({T_\tau,G_\tau,a^R_\tau})&=q_\tau \supscr{{\mc J}}{coll}_{C^1}(T_\tau,G_\tau,a^{R+})\\
    &+(1-q_\tau)\supscr{\mc R}{coll}({a^{R-},*,*}),
\end{align*}
\begin{align*}
    \supscr{\bar{\mc R}}{track}({G_\tau,q_\tau}) &=q_\tau\supscr{{\mc J}}{track}({G_\tau,a^{R+}_\tau})\\
    &+(1-q_\tau)\supscr{{\mc J}}{track}({G_\tau,a^{R-}_\tau}).
\end{align*}

Since, at the beginning of trial $t$, the complexities $C^1_t$ and $C^2_t$ are known, for $\tau=t$, we set $\beta_1$ and $\beta_2$ to 0 or 1, depending on the context, in Lemmas~\ref{lem:expt-rewards}~and~\ref{lem-ce-evolution} for use in optimization problem~\eqref{eq:mpc}.
Likewise, for $\hat{B}^P_t$, $E_t$ is known and we set $\epsilon_t$ to 0 or 1, depending on the outcome. 
% $q_{t-1}$ is known required to calculate the entries of $\hat{B}^P_\tau$.

In the MPC framework, the optimization problem~\eqref{eq:mpc} is solved at the beginning of each trial $t$, the optimal sequence of robot actions $\{\bar q_t, \ldots \bar q_{t+N-1}\}$ is computed and only the first action is executed, i.e., $a^R_t$ is chosen as the autonomous collection with probability $\bar q_t$, and seeking assistance, otherwise. 
The look-ahead horizon $N$ is a tuning parameter that is selected to balance the performance and computational time and to account for the accuracy of the look-ahead model.  We solved the optimization problem~\eqref{eq:mpc}, using MATLAB's \texttt{fmincon} function. The assistance-seeking policy obtained using the above MPC formulation with a look-ahead horizon $N=5$ is shown in Fig.~\ref{fig:Robot_pol}.

In low-complexity object collection tasks, the optimal action for the robot is always to collect autonomously, regardless of the tracking task complexity, previous experience, previous tracking performance, or trust level. This outcome is anticipated because of the low interruption rate and high success rate observed in low-complexity object collection tasks.

% In high-complexity object collection tasks, the optimal action is to ask for assistance if secondary task engagement $P_t$ falls below a threshold of approximately $P_t=5$, regardless of tracking task complexity, previous experience, or trust level. 
In high-complexity object collection tasks, the computed optimal action is to seek assistance if secondary task engagement $G_t$ falls below $G_t=5$, regardless of other variables.
Since low target-tracking performance may be caused by the human paying more attention to the supervisory task, the optimal policy seeks assistance at low target-tracking engagement; thereby allowing the human to finish the tracking task and then assist the manipulator. 
In contrast, when tracking task engagement is high, the optimal policy depends on human trust: it asks for assistance at low trust levels and collects autonomously for high trust levels. The threshold of trust for seeking assistance decreases with target-tracking performance. Since higher target-tracking performance is an outcome of higher engagement with the tracking task resulting in fewer interruptions by the supervisor, the optimal policy attempts autonomous collection of the object.

Interestingly, for high-complexity object collection and normal speed tracking tasks,  
the threshold of trust for asking assistance in the optimal policy is lower when the previous experience is negative ($E^-$) compared to when it is positive ($E^+$). Since the probability of interruption increases following a failure, the policy aims to mitigate this interruption by requesting assistance, which results in human trust repair and enables human to improve their tracking performance.

\begin{figure}
\begin{subfigure}{.5\textwidth}
  \centering
  \includegraphics[width=4cm]{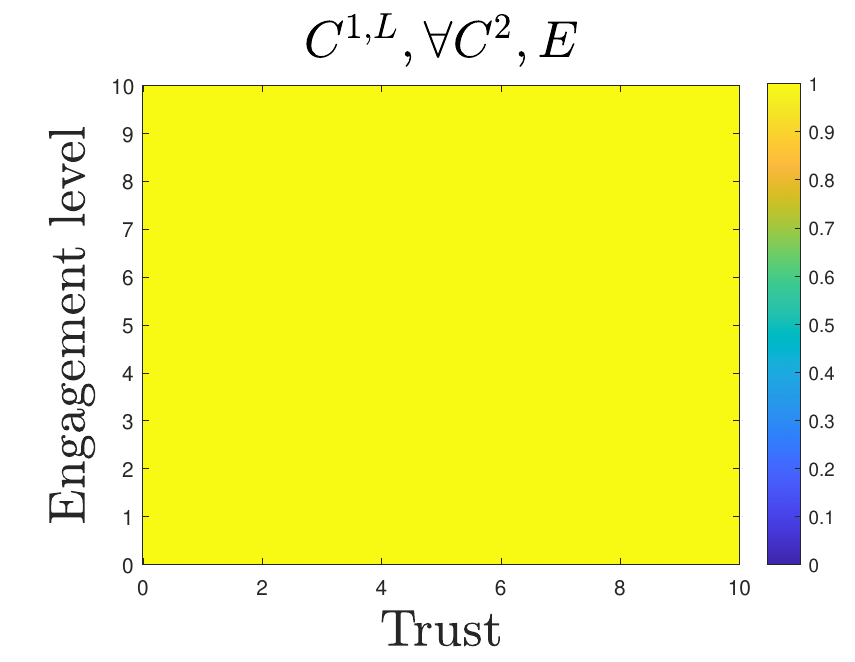}
  \caption{Optimal action for $C^1=\supscr{C}{1,L}$}
  \label{fig:sfig1}
\end{subfigure}
\begin{subfigure}{.5\textwidth}
  \centering
  \hspace*{0.25cm}
  \includegraphics[width=7cm]{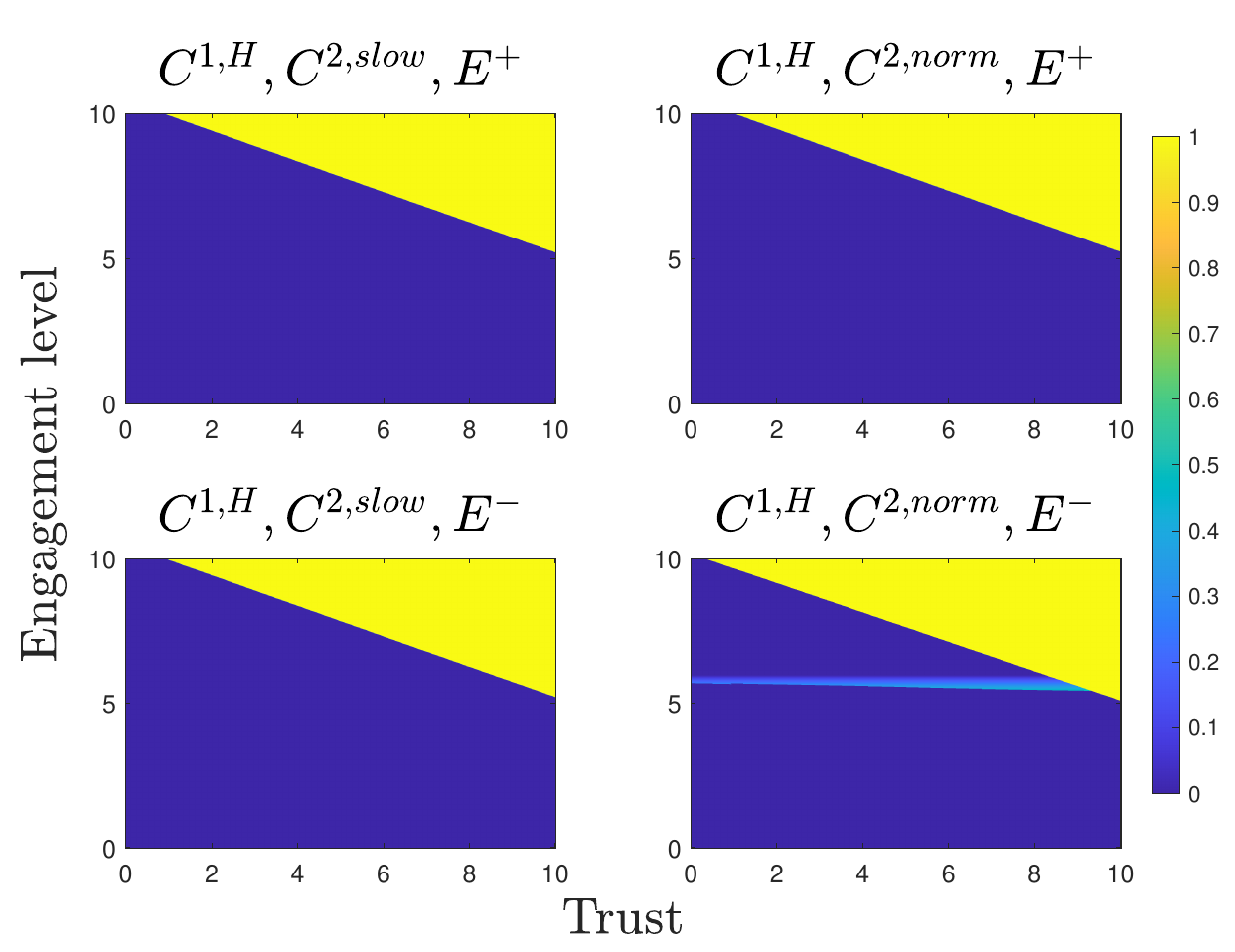}
  \caption{Optimal action for $C^1=\supscr{C}{1,H}$}
  \label{fig:sfig2}
\end{subfigure}
\caption{MPC policy: The optimal action for $\supscr{C}{1,L}$ is to collect, whereas for $\supscr{C}{1,H}$, action switches based on the trust and engagement states.}
\label{fig:Robot_pol}
\end{figure}

\subsection{Evaluation Experiment}

We conducted a second set of human experiments to evaluate the designed assistance-seeking policy. For comparison, we also computed a baseline policy that only accounts for the object-collection complexity and the tracking speed. 
Using data from the initial data collection experiment, we estimated the probability of reliance $\prob(a^{H+}|C^1, C^2)$ on the mobile manipulator, and the probability of success $\prob(p\geq0.75|C^1, C^2)$ in the tracking task. We computed the expected reward for action $a^{R+}$ and $a^{R-}$ for each possible value of $(C^1, C^2)$. We deduce that the optimal baseline policy for the mobile manipulator is to always attempt autonomous collection. 
We refer to this baseline policy as ``greedy policy".

We recruited $5$ participants, and each one completed two blocks of experiments, during which the robot followed the MPC and greedy policies. The order of the two blocks was randomized. In each block, participants engage in 
15 trials each of low-complexity and high-complexity supervisory object collection tasks. Half of these trials involved simultaneous execution of slow-speed target tracking tasks,  while the remaining involved normal-speed target tracking. Sequences of object collection and tracking tasks with the above-mentioned complexities were selected and these sequences were randomly permuted for every participant. 

To match a realistic scenario in which the participants may not report their trust after each trial, we did not use the reported trust values and estimated trust and tracking task engagement of the human participant using their actions to rely or intervene, and the measured tracking performance $p_t$. 
To this end, we adopted a particle filter with human trust dynamics~\eqref{eq:trust-dynamics} and engagement dynamics~\eqref{eq:performance-dynamics}, and using the human action model~\eqref{eq:action-model} and target-tracking task performance output~\eqref{eq:performance-measurement} to estimate $T_{t+1}$ and $G_t$.

The cumulative reward for all participants under both the greedy and the MPC policy is illustrated in Fig.~\ref{fig:score}. The score of a trial is calculated as the sum of rewards in tracking and collection tasks. The median scores are  $65.75$ and $57$ for the MPC and the greedy policy, respectively. The MPC policy outperforms the greedy policy for most participants. The cumulative number of interruptions in our MPC policy, $16$, is also lesser than the greedy policy, $23$.

\begin{figure}[h!]
    % \vspace{0.03in}
    \centering
    \includegraphics[width=7cm]{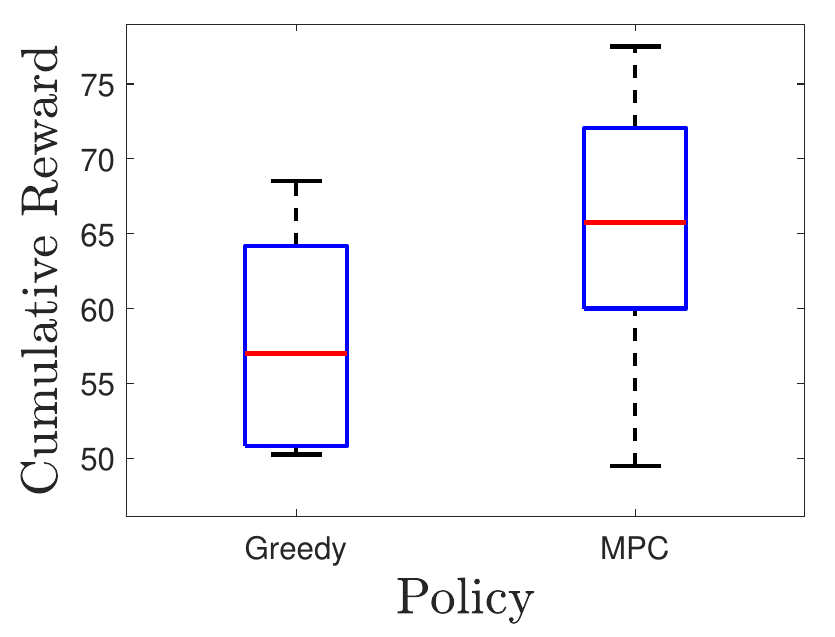}

    \caption{Cumulative reward (sum of collection and tracking rewards) statistics for both policies}
    \label{fig:score}
    % \vspace{-0.15in}
\end{figure}

\section{Discussion}\label{sec:discussion}

When humans encounter a multitasking scenario, they must optimize their resources to achieve acceptable performance in every task. In the context of a human-robot team, human trust in the robot and their engagement in their own responsibilities play a key role in their multi-tasking performance. 
To understand and leverage human trust and engagement dynamics toward designing better policies for the robot, we developed a model and estimated it using human participant experiment data. 

In our prior study~\cite{mangalindan2023trust}, we 
investigated the supervisory object collection task without any secondary task. In contrast to the dual-task paradigm, we observed a higher interruption rate in our prior study (0.173 vs. 0.095). This is possibly due to the division of engagement across two tasks. 
Additionally, the optimal assistance-seeking policy in the single-task paradigm, as described by~\cite{mangalindan2023trust}, seeks assistance only in cases of high task complexity and low human trust. In this paper, the robot's actions are guided by both the level of human trust and engagement in a secondary task. When engagement with the secondary task is low, resembling a single-task paradigm, the optimal policy consistently seeks assistance. The optimal policy considers the secondary task performance, which tends to decline at low engagement levels, and seeking assistance allowing humans to satisfactorily complete the secondary task before assisting.
 
Our model parameters indicate that failures decrease trust, aligning with the literature that negative experiences negatively affect trust~\cite{desai2013impact,yang2017evaluating}. Similarly, when humans interrupt and intervene, their trust in the robot tends to decrease~\cite{chen2020trust}.
Our human action model suggests that humans are unlikely to rely on robots in scenarios associated with high complexity and low trust. This is consistent with the observations in the literature that at a higher risk, a higher trust is required for humans to rely on the robot~\cite{chen2020trust,akash2020human}.

In a dual-task setup, when trust in the robot is low, they tend to interrupt the robot, reducing their attention to the other task and thereby diminishing their performance. When a human supervises an autonomous robot, it has been established that as human trust towards the robot increases, their rate of monitoring and interruption decreases~\cite{zahedi2023trust}. 
When performing a dual task, if performance on one task diminishes, it is due to the other task consuming most of the operator's resources~\cite{casner2010measuring}.
Therefore, we can consider their engagement in the secondary task as a measure of their trust. When their trust in the robot is low, they tend to supervise it more closely, diverting attention from their tracking task, thus decreasing their engagement and diminishing their performance.

\section{Conclusions}
\label{sec:concl}
Using a dual-task paradigm, we focused on a scenario in which a human supervises object collection by a robot while performing a secondary target-tracking task. We studied the impact of robot performance on the primary task and human performance on the secondary task on the evolution of trust and secondary task engagement. We modeled the trust and secondary task performance dynamics using linear dynamical systems with Gaussian noise and human action selection probability as a static function of their trust and secondary task engagement. A data-collection experiment with human participants was conducted and the collected data was used to estimate these models. 
We formulated an optimal assistance-seeking problem for the robot that seeks to optimize team performance while accounting for human trust and secondary task engagement and solved it using Model Predictive Control (MPC). We showed that the optimal assistance-seeking policy is to never seek assistance in low-complexity object-collection trials and to seek assistance in high-complexity object-collection trials only when the trust is below a secondary-task-performance-dependent threshold. Specifically, this threshold decreases with the performance. 
We compared this proposed policy with a greedy baseline policy and conducted experiments with human participants to evaluate the MPC policy. The results showed that the MPC policy outperformed the greedy policy.

\bibliographystyle{IEEEtran}

\bibliography{main}

\end{document}